\tikzstyle{two dot dash}= [dash pattern=on 3pt off 1pt on \the\pgflinewidth off 1pt on \the\pgflinewidth off 1pt]
\tikzstyle{three dot dash}= [dash pattern=on 3pt off 1pt on \the\pgflinewidth off 1pt on \the\pgflinewidth off 1pt on \the\pgflinewidth off 1pt]
\definecolor{LB}{RGB}{65,105,225}
\definecolor{LO}{RGB}{210,127,0}
\definecolor{LP}{cmyk}{0, 0.7808, 0.4429, 0.1412}
\definecolor{LG}{rgb}{0.13, 0.55, 0.13}
\definecolor{LR}{rgb}{0.77, 0.01, 0.2}
\newcommand{\real}{\mathbb{R}}
\newcommand{\A}{\mathcal{A}}
\newcommand{\D}{\mathcal{D}}
\newcommand{\R}{\mathcal{R}}
\newcommand{\argmax}[2] {\mathrm{arg}\max\limits_{#1}#2}
\DeclareSymbolFont{bbold}{U}{bbold}{m}{n}
\DeclareSymbolFontAlphabet{\mathbbold}{bbold}
\newcommand\oprocendsymbol{\hbox{$\square$}}
\newcommand\oprocend{\relax\ifmmode\else\unskip\hfill\fi\oprocendsymbol}
\title[Fusing Multiple Algorithms for Heterogeneous Online Learning]{Fusing Multiple Algorithms for Heterogeneous Online Learning}
\author{
   \Name{Darshan Gadginmath} \Email{dgadg001@ucr.edu}\\
 \Name{Shivanshu Tripathi} \Email{strip008@ucr.edu}\\
 \Name{Fabio Pasqualetti} \Email{fabiopas@engr.ucr.edu}\\
  \addr University of California, Riverside  
{\thanks{This material is based upon work supported in part by awards
ONR-N00014-19-1-2264, ARO W911NF-20-2-0267, and AFOSRFA9550-19-1-0235.}}
}
\begin{document}

\maketitle


\begin{abstract} This study addresses the challenge of online learning in
 contexts where agents accumulate disparate data, face resource constraints,
 and use different local algorithms. This paper introduces the Switched
 Online Learning Algorithm (SOLA), designed to solve the heterogeneous online
 learning problem by amalgamating updates from diverse agents through a dynamic
 switching mechanism contingent upon their respective performance and available
 resources. We theoretically analyze the design of the selecting mechanism to
 ensure that the regret of SOLA is bounded. Our findings show that the number
 of changes in selection needs to be bounded by a parameter dependent on the
 performance of the different local algorithms. Additionally, two test cases
 are presented to emphasize the effectiveness of SOLA, first on an online
 linear regression problem and then on an online classification problem
 with the MNIST dataset. 
\end{abstract}



\section{Introduction}\label{introduction}

Multi-agent learning frequently involves scenarios in which agents gather
disparate data at varying rates, collectively seeking to address an online
optimization problem. Some instances include collaborative localization,
search-and-rescue operations, coverage control, etc. Compounding the complexity
of these scenarios is the constraint of limited data processing and
computational capabilities, and the need for time-sensitive decision-making.
Conventionally, either heterogeneous data or gradients are pooled~\cite
{HZ-etal:2018,YE-etal:2021, SAA-SK-KR:2023}, or heterogeneity is ignored and a
distributed learning algorithm is deployed~\cite
{BLB-etal:2023,HZ-etal:2021}. The former approach raises privacy concerns,
while the latter proves suboptimal due to the inherent heterogeneity of the
data. Effectively addressing the heterogeneity of data and resource constraints
in an online learning problem remains an unresolved challenge.  

An alternative approach to mitigate the challenges posed by data heterogeneity
and resource constraints is to employ distinct algorithms tailored to the
specific characteristics of the data, computation, and communication resources.
Nevertheless, adopting distinct algorithms introduces the risk of
underutilizing the available data. In this study, we present a systematic
method to integrate updates provided by distinct algorithms to solve the
heterogeneous online learning problem.

\subsection{Problem Statement}

We seek to solve the following online minimization problem:
\begin{align}
\min_x F(x, \bigcup\limits_i^M \D^i(t)). \label{prob:online-learning-individual}
\end{align}

Here, $x\in \real^n$ is a parameter that needs to be collectively estimated by
agents $i \in \{1,2,\dots, M\}$. The data gathered by each agent up to time $t$
is denoted by $\D^i(t)$. More precisely, let $T^i \in \real$ denote the set of time instances when agent $i$ collects new data. Then $T^i = \{t^i_1,t^i_2,\dots\}$, where $t^i_j$ denotes the $j^{th}$ round of sampling by agent $i$. We use the set $T$ to denote all the time instances when new data is available. That is, $T = \bigcup_i~T^i =\{t_1, t_2, \dots\}$.
We drop the superscript $i$ to denote a time instance when new data is acquired by any agent. The elements of $T$ satisfy $t_1 \leq t_2 \leq \dots \bar{t}$, where $\bar{t} = \max_{i,j} t^i_j$. Let the samples collected by agent $i$ at the $j^{th}$ round be denoted by $s(t^i_j)$. The data collected by agent $i$ until time $t$ is 
\begin{align}
\D^i(t) = \bigcup\limits_{\substack{t^i_j \in T^i,t^i_j \leq t}} s(t^i_j).
\end{align} 
Every agent $i$ is constrained by their computation power and communication
capabilities. Hence, they need to employ a local algorithm $\A^i$ with their
local data to solve the online optimization problem. For instance, a single
agent can employ a centralized algorithm such as gradient descent
(GD), stochastic gradient descent (SGD), or batch-wise GD~\cite{RD-etal:2019}. An agent who is composed of a system of smaller units can
collectively employ distributed algorithms such as Decentralized SGD or
Federated Learning depending on the availability of resources. As an
example, consider the following situation: an e-commerce entity equipped with
data centers strategically situated across varied geographical locations,
adopts an asynchronous data acquisition methodology from online users for the
purpose of targeted advertisement display. In this scenario, each center
employs a localized algorithm to process its specific dataset, owing to
limitations in both computational and communicative capabilities. 

Agents may choose to delay their decision-making process in order to accumulate
an ample amount of data or computational resources necessary for solving
problem~\eqref{prob:online-learning-individual}. However, this approach may be
suboptimal, given that decisions are frequently time-sensitive. A practical
example is that of naval vessels mapping the sea for adversarial entities.
Vessels positioned at varying distances from the shoreline collect sonar data,
yet their computational and communicative capabilities are constrained by their
respective locations. Specifically, vessels in closer proximity to the coast
benefit from superior computational resources, albeit with diminished data
quality, as elucidated by~\cite{CMF-FBJ:2002}. In this scenario, waiting to
gather sufficient resources can be extremely dangerous. Therefore, the updates
from different algorithms need to be fused in an online fashion as soon as
updates are available to solve the problem~\eqref
{prob:online-learning-individual}.

\subsection{Related Work} Distributed online optimization has been extensively
 studied as detailed in~\cite
 {HBM:2017,SCHH-etal:2021,XL-LX-NL:2023}. However, these studies do not
 consider cooperatively using different algorithms in a constrained
 time-sensitive setting. Popular algorithms such as decentralized SGD and
 Federated Learning in the presence of asynchronous agents were studied
 in~\cite{JJ-etal:2021,YC-etal:2020}. In~\cite{JJ-etal:2021}, decentralized
 SGD is proposed with asynchronous agents but it does not incorporate agents running
 different algorithms and it also requires extensive communication between
 agents. Asynchronous online Federated Learning \cite{YC-etal:2020} requires
 the presence of a coordinator, and still does not fuse different algorithms.
 Model fusion has received attention in supervised learning~\cite
 {WL-etal:2023}. Model fusion in online learning case has received
 significantly less interest due to its complexities~\cite
 {DJF-AR-KS:2015,TNH-etal:2019,AC:2019}. These works typically consider selecting models from several algorithms at every time step. Particularly,~\cite{AC:2019} shows
 that algorithms with bounded regrets can be fused simply by averaging the
 parameters and still maintain bounded regret. However, in our case, we seek to
 fuse the updates from different agents by employing only a single agent at any
 given time. Therefore, it is unclear how agents with different data and
 resources could cooperatively solve problem~\eqref
 {prob:online-learning-individual} by running their own local algorithm.

\subsection{Contributions}
We provide an algorithm called the Switched Online Learning Algorithm (SOLA) to
   solve Problem~\eqref{prob:online-learning-individual} by fusing the
   updates from agents running different algorithms. We solve the considered
   problem by switching between the agents and fusing their updates based on
   their performance. We provide a sufficient condition to guarantee a bound on the regret of
   SOLA based on the rate at which different algorithms are chosen. To this
   end, we model SOLA as a switched dynamical system and ensure its
   contractivity. We numerically analyze the performance of our algorithm for the online
   linear regression problem and also the online classification problem with
   the MNIST dataset\footnote{Code repository: \url{https://github.com/Shivanshu-007/Heterogeneous-online-optimization}
}. 

\section{Switched Online Learning Algorithm (SOLA)}

In this section, we describe the proposed Switched Online Learning Algorithm
(SOLA). The input to any local algorithm $\A^i$ at time $t$ is the data $\D^i
(t)\in \real^{m(t)\times p}$, and the parameter $x \in \real^n$. Note that
the dimension of $\D^i(t)$ is dependent on time as each agent acquires new data
over time. The number of samples is given by $m(t)$ and the number of features
is $p$. Henceforth, we simply use the variable $k$ to denote the discrete instances $t_k \in T$, 
i.e. $k \in \{1,2,\dots, |T|\}$. The update provided by algorithm $\A^i$ is given by the map
$\A^i(x, \D^i(k)): \real^{n} \times \real^{m(k)\times p} \rightarrow \real^
{n}$. At any instance $k$, the \emph{selecting signal} $\sigma(k): \{1,2,...,|T|\} \rightarrow \{1,2,\dots,M\}$ selects an agent $i$ if the agent has new data. Agent $\sigma(k)$ uses its local algorithm
$\A^{\sigma(k)}$ to update the parameter $x(k-1)$. The update provided by
$\A^{\sigma(k)}$ is used to solve the problem~\eqref
{prob:online-learning-individual} as
\begin{align}\label{eqn:sola-dynamics}
x(k) = \alpha(k) \ \A^{\sigma(k)}(x(k-1),\D^{\sigma(k)}(k)) + (1-\alpha(k)) \ x(k-1).
\end{align}
We call $\alpha(k) \in [0,1]$ as the \emph{fusing variable}. We introduce
it to smoothly incorporate new updates from the chosen local algorithms $\A^
{\sigma(k)}$. The fusing variable depends on the performance of the algorithm
of the local algorithms. Let us define a performance metric as $P(x,\D): \real^
{n} \times \real^{m \times p} \rightarrow \real_{\geq0}$. Common performance
metrics for classification problems are precision, recall, true positive rate, etc.~\cite{DJH:2012}. In regression problems, performance is often computed as the inverse of the norm of error, or the trace of the inverse of error covariance~\cite
{SMK:1993}. We are particularly interested in metrics that have higher values to signify better performance. Given the performance metric, the fusing variable $\alpha(k)$ is
defined as:
\begin{align}
\alpha(k) &= \frac{P(x(k^+),\D^{\sigma(k)}(k))}{P(x(k^+),\D^{\sigma(k)}(k)) + P(x(k-1),\D^{\sigma(k-1)}(k-1))},\label{eqn:fusing-var}
\end{align}
where $x(k^+) = \A^{\sigma(k)}(x(k-1),D^{\sigma(k)}(k))$. Note that $\alpha(k)$ is
using the performance of $\A^{\sigma(k)}$ to incorporate its update. If the
performance is poor on the local data $\D^{\sigma(k)}(k)$, the fusing variable $\alpha(k)$ is closer to
$0$, whereas $1-\alpha(k)$ is closer to 1. This implies that the former parameter $x(k-1)$ has a
higher influence on the updated parameter $x(k)$. Similarly, if the performance of $\A^{\sigma(k)}$ is
superior to that of $x(k)$, more weight is given to $x(k^+)$. We describe the
proposed algorithm SOLA in detail in Algorithm~\ref{algo:SOLA}.
\begin{remark}{\textbf{(Naive switching):}} The design of the fusing variable
 $\alpha(k)$ is crucial to SOLA because the updates from $\A^{\sigma(k)}$ can
 be vastly different from $x(k)$. For instance, by setting $\alpha(k) = 1$, the
 update of one agent acts as the input to the subsequent agents. This is a
 naive method of switching between agents without any consideration for the
 performance of each agent. Switching naively
 between different agents may cause abrupt and large changes in the parameters,
 which may be undesirable. This is illustrated in the online regression problem
 represented in Figure~\ref{fig:naive-linear-regression}. SOLA chooses between two agents running centralized GD and decentralized SGD with five sub-units. Figure~\ref{fig:naive-linear-regression} compares the naive case when $\alpha(k) = 1$, and the case when $\alpha(k)$ is dependent on the performance of the local algorithms. We see frequent jumps in
 the parameter $x(k)$ resulting in a chattering behavior with an improper
 choice of $\alpha(k)$. However, when $0\leq\alpha(k)\leq 1$, we see an improvement in the performance and
 the chattering behavior is absent. The detailed simulation setting is provided
 in Section \ref{sim:linear reg}. 
\end{remark}
\begin{figure*}
\centering
\begin{tikzpicture}
  \node (img1)  {\includegraphics[width=0.450\textwidth]{./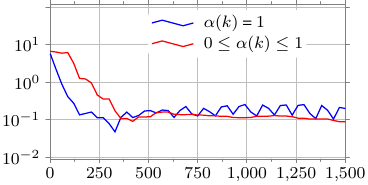}};
  \node[above of= img1, node distance=0cm, yshift=-2.2cm,font=\color{black}]{time ($t$)};  
  \node[left of= img1, node distance=0cm, rotate=90, anchor=center,yshift=3.9cm,font=\color{black}] {$\|x(t) - x^*\|$};
\end{tikzpicture}
\vspace*{-1.2em}
\caption{Online linear regression with SOLA comparing a naive choice of the
 fusing variable $\alpha_t=1$ and a fusing variable that incorporates the
 performance of each agent. It is evident that a naive choice of the fusing
 variable results in frequent jumps of the parameter $x(t)$. }
\label{fig:naive-linear-regression}
\end{figure*}
\begin{remark}{\textbf{(Choice of performance metric):}} SOLA uses updates from
 different agents for a general online optimization problem. Therefore, the
 optimal choice of a performance metric is dependent on the cost function, data
 and the constraints of each agent. For instance, consider the online linear
 regression problem. The recursive least squares algorithm in~\cite
 {SMK:1993} incorporates data arriving incrementally to solve an online linear
 regression problem. Although the algorithm operates with only a single agent,
 it incorporates incoming updates by weighting them with the inverse of the
 error covariance. It can be shown that the choice of using the inverse of the
 error covariance in the recursive least-squares algorithm produces the Best
 Linear Unbiased Estimator (BLUE).  
\end{remark}

\begin{algorithm}[tbh] 
\caption{Switched Learning Algorithm}
\begin{algorithmic}\label{algo:SOLA}
\STATE\textbf{Initialize:} $x(0),~\sigma(0),~\D^{\sigma(0)}(0)$.\\
\SetAlgoLined
\STATE 1. \textbf{For} $k = 1 $ to $|T|$\textbf{:}\\
\STATE 2. \quad \quad Select agent $i$ using switching signal, $\sigma(k) = i$. \\ 
\STATE 3. \quad \quad Agent $\sigma(k)$ receives $x(k-1)$ and $P(x(k-1),\D^\sigma(k-1)(k-1))$ from $\sigma(k-1)$.  \\
\STATE 4. \quad \quad Update parameter $x(k-1)$ using algorithm $\A^{\sigma(k)}$: $x(k^+) = \A^{\sigma(k)}(x(k-1),D_{\sigma(k)}(k))$. \\
\STATE 5. \quad \quad Compute performance metric $P(x(k^+)),D_{\sigma(k)}(k))$ and fusing variable $\alpha(k)$~\eqref{eqn:fusing-var}.\\
\STATE 6. \quad \quad Incorporate update: $x(k) = \alpha(k) \A^{\sigma(k)}(x(k-1),\D^{\sigma(k)}(k)) + (1-\alpha(k)) x(k-1)$.\\
\end{algorithmic}
\end{algorithm}
 
\section{Design of Selecting Signal}

The selecting signal $\sigma(k)$ determines the choice of subsystem for the
update, and in turn, determines the regret of SOLA. In this section, we analyze
the effect of the switching signal on the regret of the Switched Learning
Algorithm. Let us denote any algorithm employed to solve problem~\eqref
{prob:online-learning-individual} as $\mathcal{A}$. The regret of algorithm
$\A$ is denoted by $\R_{\A}$ and is defined for $K\geq1$ as
\begin{align}
\R_\A(K) &= \sum_{k=1}^K F(x(k), \D(k)) - \sum_{k=1}^T F(x^*, \D(k) ).  
\end{align} 
Here $x(k)$ refers to the parameters provided by algorithm $\A$ at time $k$
using data $\D(k)$, and $x^*$ is the optimal parameter given all the data a priori. It is essential for
an algorithm to have bounded regret as it compares the performance of the
algorithm to an optimal choice. We now show that SOLA has bounded regret under
the right conditions of the selecting signal $\sigma(k)$.

We analyze the regret achieved by SOLA by viewing the algorithm as a dynamical
system. Optimization algorithms have received extensive attention from the
view of dynamical systems~\cite
{AE-TE-MP-PLK:2005,VS-OS:2009,SR-JAB:2011,AS-PJ-AT:2012,EDS:2022,PCV-FB:2022,LK-PW-JJS:2022}.
Particularly, algorithms that are stable have been shown to have bounded
regret~\cite{SR-JAB:2011,AS-PJ-AT:2012}. A useful tool to study these
optimization algorithms and stability is contraction theory~\cite
{PCV-FB:2022,LK-PW-JJS:2022}. A contracting algorithm is defined as follows.  
\begin{definition}{(\textbf{Contracting algorithm},~\cite{WL-JJES:1998})}\label{def:contraction}
Let the updates provided by an algorithm $\A$ be given by the dynamical system 
\begin{align}
x(k) = \A(x(k-1),\D(k))\label{eqn:algo-dynamicalsys}
\end{align}
where $x(k)$ and $\D(k)$ are the parameter and data at time $k$, respectively. The differential dynamics of the algorithm is then given by 
\begin{align*}
\delta_{x(k)} = \frac{\partial\A(x(k-1),\D(k))}{\partial x(k-1)} \delta_{x(k-1)}.
\end{align*}
The associated distance for the differential dynamics is denoted by
\begin{align*}
V(x(k))&= \delta_{x(k)}^\top M(x(k)) \delta_{x(k)}, \\
M(x(k)) &= \left(\frac{\partial\A(x(k),\D(t))}{\partial x(k)} \right)^\top  \frac{\partial\A(x(k),\D(t))}{\partial x(k)}.
\end{align*}
Here $M(x(k))$ is a symmetric positive-definite matrix function, and it is uniformly bounded as $\gamma_1 I \leq M(x(k)) \leq \gamma_2 I$, for all $x(k)$.
The algorithm $\A$ is said to be contracting if 
\begin{align}
V(x(k)) \leq \beta V(x(k-1)),
\end{align}
where $0 < \beta \leq 1$, is the rate of contraction.
\end{definition}
We show that any contracting algorithm achieves bounded regret in Appendix~\ref
{app:online-stab-contraction}. Given that contracting algorithms achieve
bounded regret, it is sufficient to ensure that SOLA is a contracting algorithm
by designing the switching signal $\sigma(k)$. We make the following
assumptions for our regret analysis.
\begin{enumerate}[label=\textbf{(A\arabic*)}]
  \item \label{assu:convex}$F(x,\D)$ is $\ell$-convex in $x$: $\nabla^2 F(x,\D) > \ell I$. 
  \item \label{assu:cont-metric}Every local algorithm $\A^i$ is contracting with a rate $\beta^i$. Further, for each pair of algorithms $(i,j)$, there exists $\mu^{ij}>1$ such that the distance of differential dynamics is bounded as $V^{i}(x(k)) = \mu^{ij} V^{j}(x(k))$. 
\end{enumerate}
Assumption \ref{assu:convex} is a commonly used in regret analysis for online problems~\cite{HBM:2017}. However, \ref{assu:cont-metric} needs more careful attention as it comes from the perspective of dynamical systems. Several algorithms such as gradient descent~\cite{PCV-FB:2022}, SGD and decentralized SGD algorithms have been proven to be contractive~\cite{NMB-JJES:2020}. When these algorithms are used to solve a common online learning problem, assumption~\ref{assu:cont-metric} captures the relationship between the differential dynamics of the local algorithms. Particularly, it provides an upper bound on the relative difference between the performance of the various local algorithms. We introduce $\bar{\mu}$ and $\bar{\beta}$ which will be useful for our regret analysis.
  \begin{align*}
  \bar{\mu} = \argmax{ij}{\mu^{ij}}, \quad 
  \bar{\beta} = \argmax{i}{\beta^i}
  \end{align*}
The variables $\bar{\mu}$ corresponds to the largest difference in differential dynamics and $\bar{\beta}$ corresponds to the slowest contracting rate. The following theorem addresses the design of the selecting signal such that SOLA achieves bounded regret.
\begin{theorem}{\textbf{(SOLA achieves bounded regret)}} For a selecting signal $\sigma(k)$, let the number of switches between different agents be $N(k_1,k_2)$ over any horizon $\left[k_1,k_2\right]$, where $k_2>k_1$. That is,
\begin{align}
N(k_1,k_2) &= \sum\limits_{k=k_1+1}^{k_2} r(k),\quad \mathrm{where~} 
r(k) = \begin{cases}
0, \quad \sigma(k) = \sigma(k-1),\\
1 ,  \quad \mathrm{otherwise}.
\end{cases}
\end{align}
Then, if the number of switches satisfies
\begin{align}
N(k_1,k_2) &\leq N_0 + \frac{k_2 - k_1}{\tau},
\end{align}  
where $N_0>0$ is a constant, and $\tau= -\frac{\ln(\bar{\mu})}{\ln (\bar{\beta})}$, 
then SOLA achieves bounded regret: $\R(K) \leq \epsilon(K)$, where $\epsilon(K)$ is a decreasing function in $K$. 
\label{thm:selecting-signal}
\end{theorem}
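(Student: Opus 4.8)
The plan is to view SOLA~\eqref{eqn:sola-dynamics} as a switched dynamical system and to show that, under the stated switching condition, it inherits the contraction property of Definition~\ref{def:contraction}; the regret bound then follows from the result of Appendix~\ref{app:online-stab-contraction} that any contracting algorithm has bounded regret. The analysis rests on two complementary ingredients. First, \emph{between} switches the selecting signal is constant, so on every interval on which $\sigma(k)=i$ the update is the convex combination $x(k)=\alpha(k)\A^{i}(x(k-1),\cdot)+(1-\alpha(k))x(k-1)$; I would show this fused map preserves contraction by differentiating it, so that its Jacobian is $\alpha(k)\,\frac{\partial \A^{i}}{\partial x(k-1)}+(1-\alpha(k))\,I$, and by bounding the induced decay of $V(x(k))$ by the mode rate $\beta^{i}\le\bar\beta$ using assumption~\ref{assu:cont-metric}. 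Second, \emph{at} a switch from mode $j$ to mode $i$ the metric $M(x(k))$ changes, so the distance $V$ jumps; assumption~\ref{assu:cont-metric} controls this jump exactly, since $V^{i}(x(k))=\mu^{ij}V^{j}(x(k))\le\bar\mu\,V^{j}(x(k))$.

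Given these two facts, the core of the proof is an average-dwell-time estimate. Composing the per-step decay over the $k_2-k_1$ steps of the horizon with the multiplicative jump $\bar\mu$ at each of the $N(k_1,k_2)$ switches yields
\begin{align*}
V(x(k_2)) \le \bar\mu^{\,N(k_1,k_2)}\,\bar\beta^{\,k_2-k_1}\,V(x(k_1)).
\end{align*}
I would then substitute the hypothesis $N(k_1,k_2)\le N_0+(k_2-k_1)/\tau$ and the definition $\tau=-\ln(\bar\mu)/\ln(\bar\beta)$. Because $\ln(\bar\mu)/\tau=-\ln(\bar\beta)$, the growth factor $\bar\mu^{(k_2-k_1)/\tau}$ exactly cancels the decay $\bar\beta^{\,k_2-k_1}$, leaving
\begin{align*}
V(x(k_2)) \le \bar\mu^{\,N_0}\,V(x(k_1)),
\end{align*}
a uniform bound independent of the horizon length. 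This establishes that SOLA is contracting as a switched system: the differential distance, and hence the tracking error $\|x(k)-x^*\|$, stays uniformly bounded along the switched trajectory.

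With contraction in hand, I would close the argument by invoking the Appendix result together with the $\ell$-convexity assumption~\ref{assu:convex}: convexity lets one bound the instantaneous suboptimality $F(x(k),\D(k))-F(x^*,\D(k))$ by a term proportional to the (now bounded and decaying) distance, and summing these contributions over $k=1,\dots,K$ produces the decreasing bound $\epsilon(K)$ on $\R(K)$. I expect the main obstacle to lie in the first ingredient, namely showing that the time-varying, performance-dependent fusing variable $\alpha(k)$ does not destroy contraction: since $\alpha(k)$ can approach $0$, the fused Jacobian approaches the identity and the per-step decay degenerates, so care is needed either to absorb such near-stationary steps into the dwell-time count or to bound $\alpha(k)$ away from $0$ using the structure of~\eqref{eqn:fusing-var}. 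A secondary delicacy is that the chosen $\tau$ is exactly the critical average dwell time, so the switched bound above is only marginal; obtaining a genuinely \emph{decreasing} $\epsilon(K)$ requires that the residual contraction within each mode, rather than the boundary cancellation, ultimately drives the error down.
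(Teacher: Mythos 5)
Your proposal follows essentially the same route as the paper: the identical dwell-time chain $V(x(k_2)) \le \bar{\mu}^{N(k_1,k_2)}\,\bar{\beta}^{\,k_2-k_1}\,V(x(k_1))$ using \ref{assu:cont-metric} at switches and $\bar{\beta}$ between them, the same cancellation via $\tau = -\ln(\bar{\mu})/\ln(\bar{\beta})$, and the same appeal to the appendix result that contraction implies online stability and hence bounded regret. The only real difference is in handling $\alpha(k)$ --- you differentiate the fused map directly and flag the degeneracy as $\alpha(k)\to 0$, while the paper first sets $\alpha(k)=1$ and then cites the fact that convex combinations of contracting systems remain contracting --- and both of the delicacies you identify (the near-identity Jacobian for small $\alpha(k)$, and the marginality of the bound at the critical dwell time) are genuine issues that the paper's own proof glosses over rather than resolves.
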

We refer the reader to Appendix~\ref{app:selecting-signal} for the proof of Theorem~\ref{thm:selecting-signal}. In the literature related to switched systems~\cite{DL:2003}, $N_0$ is typically referred to as the chatter bound and $\tau$ is referred to as the average dwell time. 
\begin{remark}{\textbf{(Effect of cost function, data, and algorithm on the number of switches)}}
It is important to note that $\tau$ captures the effect of the cost function, the data and the characteristics of the different local algorithms on the admissible switching signal in Theorem~\ref{thm:selecting-signal}. For instance, poor data or a slow learning rate for the agent $i$ results in high values of $\beta^i$. This leads to a 
higher value of $\bar{\beta}$, which in turn means that the number of switches $N(k_1,k_2)$ 
needs to be small. Conversely, algorithms that learn fast allow for fast switching. Another 
aspect of $\tau$ is the similarity of the behavior of local algorithms captured through Assumption~\ref{assu:cont-metric}. If the different local algorithms behave similarly, $\bar{\mu}\approx 1$. This leads to a small value of $\tau$. Conversely, if the algorithms behave very differently $\bar{\mu}$ is larger which restricts the frequency of switching local algorithms.  
\end{remark}

\section{Numerical results}
In this section, we show the effectiveness of SOLA for online linear regression and online classification using the MNIST dataset. 

\subsection{Online linear regression}\label{sim:linear reg}
We conduct experiments for online linear regression with a synthetic dataset. The agents acquire data $\big(A^i(k), B^i(k)\big)$, where $A^i(k) \in \real^{1\times m(k)}$, $B^i(k) \in \real^{3 \times m(k)}$, and $x \in \real^{3\times1}$. The online linear regression problem is 
\begin{align*}
\min\limits_{x} \left|\left| A^{\sigma(k)}(k) - x^\top~B^{\sigma(k)}(k) \right|\right|^2_2.  
\end{align*} 
In this experiment, the number of agents $M=2$. One agent acquires data sampled as $A^1(k) = {x^*}^\top B^1(k) + \zeta^1(k)$, where $\zeta^1(k) \sim \mathcal{N}(0,3I)$.  The other agent collects data $A^2(k) = {x^*}^\top B^2(k) + \zeta^2(k)$, where $\zeta^2(k) \sim \mathcal{N}(0,30I)$. Here, $B^1(k),B^2(k) \sim \mathcal{N}(0,0.5I). $ Figure~\ref{fig:linear-regression} compares the performance of SOLA for different settings. The selecting signal $\sigma(k)$ periodically chooses between the two agents every ten instances, i.e. $N(k,k+10) = 1$. In Figure~\ref{fig:linear-regression}, we compare the performance of SOLA in two different settings, (i) Agent 1 has one sub-unit that uses centralized gradient descent, and agent 2 uses decentralized SGD with 5 sub-units, (ii) Agent 1 has 5 sub-units that perform FedAvg~\cite{BM-etal:2017} and agent 2 uses decentralized SGD. Lastly, we compare the performance of only agent 2 using decentralized SGD without SOLA. From Figure~\ref{fig:linear-regression}, it is evident SOLA with GD and decentralized SGD performs the best, whereas SOLA with FedAvg with decentralized SGD converges slower. Decentralized SGD by itself converges slower and has a higher error. 

In Figure~\ref{fig:linear-regression}(b), we compare SOLA with $M=3$. The three agents perform centralized GD, decentralized GD with five sub-units and FedAvg with five sub-units. The data used by the agents is the same as mentioned above. It is evident that a naive choice of the fusing variable not only causes more error but also leads to chattering. SOLA has a higher error when $M=3$ as compared to when $M=3$. This is because fusing more agents requires a good choice of the fusing variable and data of good quality.

\begin{figure*}
\centering
\begin{multicols}{2}
\begin{tikzpicture}
  \node (img1)  {\includegraphics[width=0.45\textwidth]{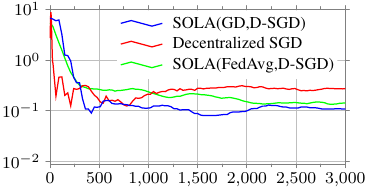}};
  \node[above of= img1, node distance=0cm, yshift=-2.0cm,font=\color{black}]{time}; 
  \node[above of= img1, node distance=0cm, yshift=-2.6cm,font=\color{black}]{(a)};  
  \node[left of= img1, node distance=0cm, rotate=90, anchor=center,yshift=4.2cm,font=\color{black}] {$\|x(t) - x^*\|$};
\end{tikzpicture}\columnbreak
\begin{tikzpicture}
  \node (img1)  {\includegraphics[width=0.45\textwidth]{./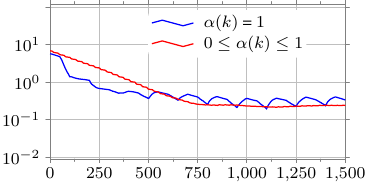}};
  \node[above of= img1, node distance=0cm, yshift=-2.0cm,font=\color{black}]{time};
  \node[above of= img1, node distance=0cm, yshift=-2.6cm,font=\color{black}]{(b)};  
\end{tikzpicture}
\end{multicols}
\vspace*{-2em}
\caption{Online linear regression with SOLA. (a) The blue curve represents SOLA choosing between agents using centralized GD and decentralized SGD, the green curve represents SOLA choosing between agents performing FedAvg and Decentralized SGD, whereas the red curve represents only decentralized SGD. SOLA in both cases performs better than pure decentralized SGD. (b) SOLA with 3 agents performing gradient descent, decentralized SGD and FedAvg compared with SOLA with a naive choice of the fusing variable, $\alpha =1 $. SOLA with good choice of $\alpha$ not only eliminate the chattering behavior, it also has a better overall error. }
\label{fig:linear-regression}
\end{figure*}

\subsection{Online classification}
\begin{figure*}[tbh]
\begin{center}

\begin{multicols}{2}
\begin{tikzpicture}
  \node (img1)  {\includegraphics[width=0.450\textwidth]{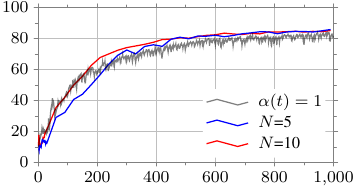}};
  \node[above of= img1, node distance=0cm, yshift=-2.1cm,font=\color{black}]  {time};  
  \node[above of= img1, node distance=0cm, yshift=-2.7cm,font=\color{black}]  {\small (a)};
  \node[left of= img1, node distance=0cm, rotate=90, anchor=center,yshift=3.7cm,font=\color{black}] { Accuracy};
\end{tikzpicture}\columnbreak
\begin{tikzpicture}
  \node (img1)  {\includegraphics[width=0.450\textwidth]{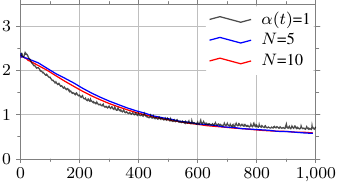}};
  \node[above of= img1, node distance=0cm, yshift=-2.1cm,font=\color{black}]  {time}; 
  \node[above of= img1, node distance=0cm, yshift=-2.7cm,font=\color{black}]  {(b)};
  \node[left of= img1, node distance=0cm, rotate=90, anchor=center,yshift=3.7cm,font=\color{black}] {Loss};
\end{tikzpicture}
\end{multicols}

\end{center}
\vspace*{-2em}
\caption{The testing accuracy and loss for SOLA on the MNIST data set. Here the selecting signal chooses between an agent running decentralized SGD and centralized GD, respectively using the MNIST dataset. (a) accuracy of SOLA, (b) entropy loss for SOLA. } \label{fig:mnist-naive}
\end{figure*}


In this experiment, we consider the online classification problem with $M=2$ and the cost is the cross-entropy loss. Here, one agent samples data from the MNIST dataset and employs centralized gradient descent. It uses a neural network that has one hidden layer with 128 neurons. 
The second agent performs decentralized SGD with sub-units having the same neural network architecture as agent 1. We compare the performance of SOLA for different numbers of sub-units for the second agent. In the first case, there are five sub-units and each sub-unit acquires only two labels. For example, the first sub-unit receives labels $0$ and $1$, the second sub-unit receives labels $2$ and $3$, and so on. In the second case, there are ten sub-units and each sub-unit receives only one distinct label. Further, every sub-unit is restricted to possess only $128$ images (reducing computation). The images sampled by the sub-units are from the MNIST dataset but marred with Gaussian noise of variance $0.5$ and zero mean. Figures~\ref{fig:mnist-naive}(a) and (b) show the accuracy and entropy loss of SOLA on testing data over time. We denote the number of sub-units as $N$ in Figure~\ref{fig:mnist-naive}. We observed that the performance metric $P(x,\D) = \frac{1}{F(x,\D)}$ gave the best performance for both accuracy and loss. The selecting signal periodically chooses between agents every five instances, i.e. $N(k,k+5) = 1$. It can be seen that SOLA still achieves an accuracy close to eighty-two percent for both cases of five and ten sub-units. However, the naive fusing choice with five sub-units has lower accuracy and more chattering.

\section{Conclusions and Future Work} In this work, we considered the scenario
 where agents with different data and resources use different local algorithms
 to solve an online learning problem. Our proposed algorithm, SOLA, provides a
 way to systematically fuse the updates from different algorithms and ensure
 that regret is bounded. We also numerically analyze the performance of SOLA
 for different online learning scenarios. Future directions include the case of
 dynamically changing data distributions, tighter regret bounds and the adversarial 
 case where byzantine agents provide malicious updates.


\begin{appendix}
\section{Contracting Optimizers Achieve Bounded Regret}\label{app:online-stab-contraction}
The connection between the stability of learning algorithms and bounded regret for online learning problems has been studied in ~\cite{TP-SV-LR:2011,SR-JAB:2011,AS-PJ-AT:2012}. Particularly, in~\cite{TP-SV-LR:2011}, the notion of online stability is defined as follows:
\begin{definition}{(\textbf{Online stability})}\label{def:online-stab}
An algorithm $\A$ is said to be online stable if 
\begin{align}
\mathbb{E}\left|\left| F(x^{\A}(k),\D(k)) - F(x^{\A}(k-1),\D(k-1))\right|\right| \leq \epsilon_{os}(k), \quad \forall t,
\end{align}
where $\epsilon_{os}(k)\rightarrow 0$ as $k\rightarrow \infty$.
\end{definition}
The notion of online stability captures the fact that for an online stable
algorithm, the change in cost incurred between any time instances is bounded by
a non-increasing function $\epsilon_{os}(k)$. Importantly, Theorem 18 in \cite{SR-JAB:2011} shows
that online-stable algorithms achieve bounded regret where $\epsilon(K) \leq \sum_{k=1}^K\epsilon_{os}(k)$. 
Also,~\cite{TP-SV-LR:2011} 
shows that iterative gradient-based methods such as GD, and SGD achieve online stability. 
We first model any iterative gradient descent algorithm $\A$ as a perturbated gradient descent. 
\begin{align}
x(t) &= \A(x(t-1),\D) = x(t-1) -\eta~(\nabla F(x(t-1),\D) + \xi(k)). \label{eqn:pertubedGD}
\end{align}
The perturbation to the true gradient is $\xi(k) \sim \mathcal{N}(0,\Sigma(\nabla F(x(k-1)))$ and $\eta$ is a constant learning rate. Further, the covariance of the perturbation $\Sigma(\nabla F(x(k)))$ satisfies $\lim\limits_{t\rightarrow \infty} \Sigma(\nabla F(x(k))) = \bm{0}$,
where $\bm{0}$ denotes the zero matrix. This model is commonly used in the analysis of algorithms such as SGD~\cite{SJ-etal:2017,SM-MDH-DMB:2017,XL-FO:2019} and decentralized SGD~\cite{NMB-JJES:2020}. The dynamics of any single sub-unit of decentralized SGD or FedAvg can be expressed with~\eqref{eqn:pertubedGD}. Further, the average parameter of all the sub-units also follows the same dynamics, however, the noise characteristics of $\xi$ differ. Here, we show that perturbed iterative gradient-based algorithms are contracting as given by Definition~\ref{def:contraction}, and are 
are online stable. 
\begin{theorem}{\textbf{(Gradient-based algorithms are contractive and achieve online stablility)}}
Consider any iterative gradient-based algorithm $\A$ which updates the parameter $x$ as given by~\eqref{eqn:pertubedGD}.
If an algorithm $\A$ is contracting by
Definition~\ref{def:contraction} in expectation, i.e., 
\begin{align*}
\mathbb{E}_{\xi}\left[V(x(k))\right] \leq \beta~\mathbb{E}_{\xi}\left[V(x(k-1))\right],
\end{align*}
then $\A$ achieves online stability. 
\end{theorem}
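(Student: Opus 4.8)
The plan is to show that the per-step cost increment is controlled by the per-step \emph{parameter} increment $\Delta x(k) := x(k) - x(k-1)$, and then to use the contraction hypothesis to force $\mathbb{E}\|\Delta x(k)\|^2 \to 0$. The key structural observation is that consecutive increments obey the same variational recursion as the differential displacement $\delta_{x(k)}$. Subtracting two consecutive updates of~\eqref{eqn:pertubedGD} and applying the fundamental theorem of calculus to write $\nabla F(x(k-1)) - \nabla F(x(k-2)) = \bar{H}_k \Delta x(k-1)$ for an averaged Hessian $\bar{H}_k$, one obtains
\begin{align*}
\Delta x(k) = \big(I - \eta \bar{H}_k\big)\Delta x(k-1) - \eta\big(\xi(k) - \xi(k-1)\big).
\end{align*}
This is exactly the differential dynamics of $\A$ acting on the finite increment, plus a noise forcing term, so I would identify $\delta_{x(k)}$ with $\Delta x(k)$ and read the hypothesis $\mathbb{E}[V(x(k))] \leq \beta\,\mathbb{E}[V(x(k-1))]$ as a contraction statement about the increments themselves.

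Second, I would convert contraction of $V$ into mean-square convergence of the increments. The uniform metric bounds $\gamma_1 I \leq M(x(k)) \leq \gamma_2 I$ give $\gamma_1 \mathbb{E}\|\Delta x(k)\|^2 \leq \mathbb{E}[V(x(k))]$, and iterating the hypothesis bounds the transient by $\gamma_2 \beta^k \mathbb{E}\|\Delta x(0)\|^2$. The re-injected noise contributes a term driven by $\mathbb{E}\|\xi(k)\|^2 = \operatorname{tr}\Sigma(\nabla F(x(k-1)))$. A discrete Gr\"onwall / geometric-series argument then yields a bound of the form $\mathbb{E}\|\Delta x(k)\|^2 \leq C\beta^k + C'\sup_{j\leq k}\operatorname{tr}\Sigma_j$ with constants depending on $\gamma_1,\gamma_2,\eta$. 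Since $0<\beta<1$ and $\lim_{k\to\infty}\Sigma(\nabla F(x(k))) = \bm{0}$ by assumption, both terms vanish, giving $\mathbb{E}\|\Delta x(k)\|^2 \to 0$.

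Finally, I would bound the cost increment directly. By the mean value theorem there is $\bar{x}_k$ on the segment joining $x(k-1)$ and $x(k)$ with $F(x(k)) - F(x(k-1)) = \nabla F(\bar{x}_k)^\top \Delta x(k)$, so by Cauchy--Schwarz
\begin{align*}
\mathbb{E}\big\|F(x(k)) - F(x(k-1))\big\| \leq \sqrt{\mathbb{E}\|\nabla F(\bar{x}_k)\|^2}\;\sqrt{\mathbb{E}\|\Delta x(k)\|^2} =: \epsilon_{os}(k).
\end{align*}
The first factor is bounded (the iterates form a mean-square Cauchy sequence by the geometric decay of $\Delta x$, hence stay in a bounded set on which $\nabla F$ is Lipschitz), and the second factor tends to zero by the previous step, so $\epsilon_{os}(k) \to 0$, which is precisely the online-stability condition of Definition~\ref{def:online-stab}.

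The hard part will be the middle step. Because the noise $\xi(k)$ is re-injected at every iteration, contraction alone does not immediately drive the increments to zero; one must cleanly separate the geometrically decaying transient from the persistent noise term and invoke the vanishing-covariance assumption $\Sigma \to \bm{0}$ to annihilate the latter. Additional care is needed because $V$ is a function of the \emph{virtual} displacement rather than the physical increment, so the identification $\delta_{x(k)} = \Delta x(k)$ must be justified through the averaged-Hessian representation above, and the expectation over $\xi$ must be taken consistently so that the stated contraction-in-expectation inequality applies verbatim to the increment sequence.
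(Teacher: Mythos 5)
Your overall route is the same one the paper takes: contraction of the parameter dynamics forces the per-step parameter increments to vanish in mean square, and a Taylor/mean-value expansion of $F$ then transfers this to the cost increments, which is exactly the online-stability condition of Definition~\ref{def:online-stab}. The difference is in execution: the paper compresses the second step into a citation of Theorem 2 of \cite{TP-SV-LR:2011} (``the cost can be expressed using the first two derivatives by Taylor expansion''), whereas you actually carry it out. Your structural observation that the finite increments $\Delta x(k)$ obey the same recursion as the virtual displacements $\delta_{x(k)}$, via the averaged Hessian $\bar H_k = \int_0^1 \nabla^2 F\big(x(k-2)+s\,\Delta x(k-1)\big)\,ds$, is the right way to make the identification $\delta_{x(k)} \leftrightarrow \Delta x(k)$ rigorous --- something the paper and its reference leave implicit --- and your separation of the geometrically decaying transient from the persistent noise term, annihilated by $\Sigma(\nabla F(x(k))) \to \bm{0}$, plays the role of the paper's expectation step.

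There is, however, a genuine gap in your final step: the boundedness of $\sqrt{\mathbb{E}\|\nabla F(\bar x_k)\|^2}$. You justify it by claiming the iterates form a mean-square Cauchy sequence ``by the geometric decay of $\Delta x$,'' but your own middle-step bound is $\mathbb{E}\|\Delta x(k)\|^2 \leq C\beta^k + C' \sup_{j\leq k}\operatorname{tr}\Sigma_j$, and the second term, while vanishing, need not be summable. If, say, $\operatorname{tr}\Sigma_j \sim 1/j$, then $\mathbb{E}\|\Delta x(k)\| \sim k^{-1/2}$, the iterates can drift by $\sim k^{1/2}$, and since $\ell$-strong convexity forces $\|\nabla F\|$ to grow at least linearly in the distance from the minimizer, the product $\sqrt{\mathbb{E}\|\nabla F(\bar x_k)\|^2}\,\sqrt{\mathbb{E}\|\Delta x(k)\|^2}$ can stay $O(1)$ instead of vanishing. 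You need an additional ingredient here: either a bounded-gradient assumption (standard in online learning), or an argument that strong convexity plus vanishing noise confines the iterates to a bounded neighborhood of the (possibly time-varying) minimizer, or a summability condition on $\operatorname{tr}\Sigma_j$. Note also that Definition~\ref{def:online-stab} compares $F(x(k),\mathcal{D}(k))$ with $F(x(k-1),\mathcal{D}(k-1))$ --- the data changes between steps as well --- and your mean-value argument, exactly like the paper's own sketch, only controls the change attributable to the parameter; the term accounting for $\mathcal{D}(k)\neq\mathcal{D}(k-1)$ is missing in both proofs.
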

\begin{proof}
The differential dynamics of the system~\eqref{eqn:pertubedGD} is given by 
\begin{align}
\delta_{x(k)} &= P(x(k-1)) \delta_{x(k-1)} - \eta \delta_{\xi(k-1)}, \\
P(x(k)) &= \frac{\partial\big(x(k) - \eta~\nabla F(x(k))\big)}{\partial x(k)} = I - \eta \nabla^2 F(x(k))
\end{align}
Here $\xi$ is treated as an external signal. For the Euclidean metric $M(x(k)) = I$, the distance of the differential dynamics is given by
\begin{align}
V(x(k)) = \delta_{x(k-1)} P(x(k-1))^2 \delta_{x(k-1)} - 2 \eta \delta_{\xi(k-1)}^\top P(x(k-1)) \delta_{x(k-1)} + \eta^2 ||\delta_{\xi(k-1)}||^2.
\end{align}
Note that $P(x(k)) < I  - \eta \ell I$ due to the $\ell$-convexity of $F$. Therefore, 
\begin{align}
V(x(k)) < (1 - \eta \ell)^2 V(x(k-1))  - 2 \eta \delta_{\xi(k-1)}^\top P(x(k-1)) \delta_{x(k-1)} + \eta^2 ||\delta_{\xi(k-1)}||^2.
\end{align}
Taking the expectation with respect to the perturbation $\xi(t)$, 
\begin{align}
\mathbb{E}[V(x(k))] &< (1 - \eta \ell)^2 \mathbb{E}[V(x(k-1))] + \eta^2 \mathrm{Var}[\xi(k-1)]. 
\end{align}
If the learning rate $\eta$ ensures that
\begin{align}
(1 - \eta \ell)^2 \mathbb{E}[V(x(k))] + \eta^2 \mathrm{Var}[\xi(k-1)] \leq \mathbb{E}[V(x(k))],
\end{align}
then $\A$ is contractive. 
The proof of online stability follows along the same lines as Theorem 2 of \cite{TP-SV-LR:2011}. The cost $F(x(t+1))$ can be expressed using the first two derivatives by Taylor expansion. The stability of the perturbed gradient can be then used to bound the change in cost as given by Definition~\ref{def:online-stab}.

\end{proof}

\section{Proof of Theorem~\ref{thm:selecting-signal}} 
First, consider SOLA with the fusing variable $\alpha(k) = 1$. Then, 
\begin{align}
x(k) &= \A^{\sigma(k-1)}(x(k-1),\D(k-1)).
\end{align}
By assumption~\ref{assu:cont-metric}, we have that between any two instances $k_1$ and $k_2$
\begin{align*}
V^{\sigma(k_2)}(x(k_2)) &\leq \bar{\beta} V^{\sigma(k_2)}(x(k_2-1)) \leq \bar{\mu} \bar{\beta} V^{\sigma(k_2 - 1)}(x(k_2-1)) \\
&\leq {\bar{\mu}}^{N(k_1,k_2)} {\bar{\beta}}^{(k_2 - k_1)} V^{\sigma(k_1)} x(k_1) = e^{N(k_1,k_2) \ln \bar{\mu}}~e^{(k_2 - k_1) \ln \bar{\beta}}~V^{\sigma(k_1)} x(k_1)
\end{align*}
To ensure that SOLA is a contraction, we need that 
\begin{align*}
N(k_1,k_2) \ln u^* + (k_2 - k_1) \ln \bar{\beta} &\leq 0
\end{align*}
To admit at least one switch if $\frac{(k_2 - k_1)}{\tau} < 1$, we introduce $N_0 > 0$.
Therefore, when the number of switches $N(k_1,k_2)$ satisfies
\begin{align}
N(k_1,k_2) &\leq N_0 + \frac{(k_2 - k_1)}{\tau},
\end{align} 
SOLA is a contracting optimizer that achieves online stability. Further, when $0 \leq \alpha(k) \leq 1$, SOLA uses a convex combination of the update by the local algorithm $\A^{\sigma(k)}$ and the parameter at the previous time instance $x(k-1)$. The convex combination of contracting systems also results in a contracting system as shown in~\cite{WL-JJES:1998}. Hence, the overall SOLA algorithm achieves online stability. 

\label{app:selecting-signal}
\end{appendix}


\bibliography{refs}


\end{document}